\documentclass{article}

\usepackage[final]{ewrl_2026}

\usepackage[utf8]{inputenc}
\usepackage[T1]{fontenc}
\usepackage{hyperref}
\usepackage{url}
\usepackage{booktabs}
\usepackage{amsfonts}
\usepackage{amsmath}
\usepackage{amssymb}
\usepackage{amsthm}
\usepackage{nicefrac}
\usepackage{microtype}
\usepackage{xcolor}
\usepackage{graphicx}

\newtheorem{proposition}{Proposition}

\title{Memory-State Critic for Asymmetric Actor-Critic\\ with Application to Vision-Based Pursuit-Evasion}

\newcommand{\afl}[1]{\textsuperscript{\rm #1}}
\author{%
  Arthur Louette\afl{1} \quad
  Alejandro S\'anchez Roncero\afl{2} \quad
  Gaspard Lambrechts\afl{3} \quad
  Pascal Leroy\afl{1,4} \\[0.4em]
  \bf Julien Hansen\afl{1} \quad
  Petter \"Ogren\afl{2} \quad
  Damien Ernst\afl{1} \\[0.7em]
  \normalfont\normalsize
  \afl{1}University of Li\`ege, Belgium \quad
  \afl{2}KTH Royal Institute of Technology, Sweden \\
  \afl{3}McGill University \& Mila Qu\'ebec AI Institute, Canada \quad
  \afl{4}Belerion, Belgium \\[0.3em]
  \texttt{arthur.louette@uliege.be}
}

\begin{document}
\maketitle

\begin{abstract}
In partially observable Markov decision processes, the optimal policy generally depends on the history of observations and past actions.
Asymmetric actor-critic methods have become popular to learn such policies when additional information, such as the true state of the environment, is available during training.
The critic, which is not needed at execution, is given access to the state.
A critic conditioned on the state alone is generally ill-defined and yields biased policy gradients.
Conditioning on the state and the history, the history-state critic restores both.
In this paper, we show that conditioning the critic on the state and the policy's own memory, i.e., the internal representation of the history through which the policy selects its actions, is already well-defined and gives unbiased policy gradients, removing the need for a second recurrent approximator of the history.
We call it the \emph{memory-state critic}.
It follows that a critic based on the policy's memory need not backpropagate its loss into that memory, even though the memory is a lossy encoding of the history.
We evaluate the memory-state critic in a vision-based pursuit-evasion environment between two quadrotors across two arena types.
The pursuer is the learning agent, and the evader is sampled per episode from a fixed pool of heuristic behaviours.
The results show that the memory-state critic outperforms the history-state critic and converges faster.
In addition to being unbiased compared to the state-only critic, it maintains a slight edge in the wall arena, where the actor’s history carries information that the privileged state alone does not.
\end{abstract}

\section{Introduction}

In many real-world decision problems, such as robotic control problems, an agent must take actions based on noisy or incomplete observations of the underlying state of its environment.
Such problems are modelled as partially observable Markov decision processes (POMDPs)~\citep{aastrom1965optimal}. 
Reinforcement learning (RL) in POMDPs addresses such decision problems through trial and error to maximise a reward~\citep{sutton1998reinforcement}.
A consequence of this partial observability is that the optimal control strategy at any given time usually depends on the entire history of observations and past actions.
In POMDPs, the history-dependent strategy, or policy, is usually implemented with a recurrent neural network (RNN) that recurrently updates an internal state, called a hidden state, based on successive observations and actions~\citep{ni2022recurrentmodelfreerlstrong}.
The hidden state is trained by backpropagating the RL loss through time, not by predicting states or observations. 
It therefore encodes what the policy needs from the history to act.
Throughout the paper, we employ state for the environment state of the POMDP and \emph{memory} for the policy's recurrent hidden state.

Many successful RL algorithms for POMDPs adopt the asymmetric learning paradigm. 
It takes foundation in the idea is that, in simulation, the true state of the environment is available at training time.
\citet{pinto2018asymmetric} introduced the asymmetric actor-critic algorithm in which the critic accesses the state only during training, while the policy accesses the observation.
This algorithm has been shown to work better in some environments and is widely used in sim-to-real robot learning~\citep{kaufmann2023champion,degrave2022magnetic,vasco2024super}.

\citet{baisero2022unbiased} proved that a critic that conditions only on the state is generally ill-defined in a POMDP.
Indeed, the state alone is not a sufficient statistic of the future execution of the POMDP, i.e., two different histories reaching the same state can induce different action distributions, since the policy conditions on the history.
Instead, the history-state critic, which takes as input both the history and the state, is well-defined because the history and state together are a Markovian variable of the future execution of the POMDP.
In addition, the history-state critic is an unbiased estimate of the history critic, and it can be used in the policy gradient.
In practice, the critic cannot take the full history either.
It has to encode it with an encoder such as a recurrent network of its own, which is trained by the value loss rather than the policy gradient.
Having a critic that takes the history as input has the disadvantage that it needs a second encoder of the history in comparison to  the state-only critic of \citet{pinto2018asymmetric}.
It doubles the recurrent computation at training time and induces a second network.

In this work, we introduce the \emph{memory-state critic}, which conditions on the state and the policy's own memory.
The critic takes as input the policy's memory, but the critic loss is not backpropagated through the recurrent encoder of the history. 
We prove that the memory-state critic gives the same on-policy value function and the same unbiased policy gradient as the history-state critic.
Indeed, the memory is a sufficient statistic of the history for future actions.
In the symmetric setting where the critic does not see the state, reusing the actor's hidden state alone would not give an unbiased gradient.
Indeed, in that setting, the memory of the agent is not a Markovian variable of the future execution of the POMDP, since it is not necessarily a sufficient statistic for future states of the environment.
However, the combination of the state and the actor's memory keeps the policy gradient unbiased.

We evaluate the memory-state critic on a vision-based pursuit-evasion environment between two quadrotors.
The pursuer is the learning agent.
It observes the world through an onboard camera and must catch an evader that regularly leaves the camera frame during maneuvers.
The evader is sampled per episode from a fixed pool of four heuristic behaviours, and this information is part of the critic's privileged state, not of the policy's observation.
The evader has a stationary policy, and the problem reduces to a single-agent POMDP.
We evaluate on two arenas that differ in their degree of partial observability: an open arena with no obstacles and the same arena with a wall to increase the degree of partial observability.
We compare the memory-state critic to three baselines: a state-only critic, a history-state critic, and a reactive critic that conditions on the state and the current observation and action. 
The memory-state critic matches the history-state critic in the final return and converges faster, as expected from the removal of the second recurrent encoder. 
It also marginally outperforms the state-only critic in the wall arena, where the state aliases the history.
We release the environment and our implementation\footnote{\url{https://github.com/LouetteArthur/MemoryStateCritic}}, built on IsaacLab~\citep{mittal2023orbit} and skrl~\citep{Serrano2022skrl}. 

The remainder of the paper is organised as follows. Section~\ref{sec:background} reviews POMDPs and the asymmetric policy gradient. 
Section~\ref{sec:method} introduces the memory-state critic and proves its two properties. 
Section~\ref{sec:experiments} reports the experiments, which are analysed in Section~\ref{sec:results}. 
The paper concludes with Section~\ref{sec:related} discussing the related work and conclusions of our work ~\ref{sec:conclusion}.

\section{Background}
\label{sec:background}

\subsection{POMDPs and history-dependent policies}

A POMDP is a tuple $(\mathcal{S}, \mathcal{A}, \mathcal{O}, T, O, R, P, \gamma)$.
In a state $s_t \in\mathcal{S}$, the agent collects observations $o_t \in \mathcal{O} $ via $O(o_t \mid s_t)$ at time $t$ and selects actions $a_t$ based on the observable history $h_t = (o_0, a_0, \dots, o_t)$.
We denote the set of histories by $\mathcal{H} = \bigcup_{t=0}^\infty \mathcal{H}_t$ with $\mathcal{H}_t \subseteq \mathcal{O} \times (\mathcal{A} \times \mathcal{O})^t$.
After taking action $a_t \in \mathcal{A}$, the state transitions according to $T(s_{t+1} \mid s_t, a_t)$.
The immediate reward is a scalar $r_t = R(s_t, a_t)$, the initial state $s_0$ is drawn from $P(s_0)$.
The discount factor $\gamma \in [0, 1)$ weights the importance of future rewards.
A policy $\pi(a_t \mid h_t)$ maps a history to a distribution over actions, and is said to be optimal when it maximises the expected return $J(\pi) = \mathbb{E}^\pi[\sum_{t=0}^\infty \gamma^t R(s_t, a_t)]$. 
We denote the future return from time $t$ by $G_t = \sum_{k=0}^\infty \gamma^k R(s_{t+k}, a_{t+k})$.

\subsection{Value functions and the asymmetric policy gradient}

The canonical value function in a POMDP is the history value function $V^\pi(h_t) = \mathbb{E}^\pi[G_t \mid h_t]$, with history-action value function $Q^\pi(h_t, a_t) = \mathbb{E}^\pi[G_t \mid h_t, a_t]$.
The history $h_t$ is a sufficient statistic of the future execution under $\pi$, so $V^\pi(h_t)$ is well-defined.
\citet{baisero2022unbiased} introduce the \emph{history-state value function} $V^\pi(s_t, h_t) = \mathbb{E}^\pi[G_t \mid s_t, h_t]$ and the corresponding $Q^\pi(s_t, h_t, a_t) = \mathbb{E}^\pi[G_t \mid s_t, h_t, a_t]$, which additionally condition on the state.
We therefore have $V^\pi(h_t) = \mathbb{E}_{s_t \mid h_t}[V^\pi(s_t, h_t)]$ and, likewise, $Q^\pi(h_t, a_t) = \mathbb{E}_{s_t \mid h_t}[Q^\pi(s_t, h_t, a_t)]$.

In contrast, the state alone is not a sufficient statistic of the process: two histories reaching the same state can induce different action distributions, since the policy conditions on the history. 
As a result, the time-invariant state value function $V^\pi(s)$ is generally ill-defined.
Even when well-defined, the resulting policy gradient is biased~\citep{baisero2022unbiased}.

For a parameterised history-dependent policy $\pi_\theta$, \citet{baisero2022unbiased} showed that the policy gradient can be written equivalently with either value function as a baseline:
\begin{equation}
  \begin{aligned}
      \nabla_\theta J(\pi_\theta)
      &= \mathbb{E}^\pi\!\left[\sum_{t} \gamma^t \nabla_\theta \log \pi_\theta(a_t \mid h_t) \, A^\pi(h_t, a_t)\right]\\
      &= \mathbb{E}^\pi\!\left[\sum_{t} \gamma^t \nabla_\theta \log \pi_\theta(a_t \mid h_t) \, A^\pi(s_t, h_t, a_t)\right],
\end{aligned}
\label{eq:asymmetric-pg}
\end{equation}

where $A^\pi(h, a) = Q^\pi(h, a) - V^\pi(h)$ and $A^\pi(s, h, a) = Q^\pi(s, h, a) - V^\pi(s, h)$. The asymmetric variant on the right uses the extra information $s$ in the critic without changing the policy gradient.

\subsection{Recurrent actor-critic architectures}

In practice, the size of the history $h_t$ grows linearly in $t$ which becomes impractical. The standard approach is to encode it through a recurrent neural network. A function $z_t = f(h_t)$ is a \emph{recurrent encoder} when it admits an update rule $z_{t} = u(z_{t-1}, a_{t-1}, o_t)$ from an initial hidden state $z_{0}$ and an arbitrary null action $a_{0}$. The encoder is deterministic given its inputs, as is standard for recurrent networks. The policy is then parameterised as
\begin{equation}
    \pi_\theta(a_t \mid h_t) = g_\theta(a_t \mid f_\theta(h_t)) = g_\theta(a_t \mid z_t^a),
\label{eq:policy-factorisation}
\end{equation}
where $f_\theta$ is the recurrent encoder and $g_\theta$ is an action head producing the action distribution. We write $z^a_t = f_\theta(h_t)$ for the policy's memory at time $t$.

The history-state critic of \citet{baisero2022unbiased} is implemented analogously, with its own recurrent encoder $z^c_t = f_\psi(h_t)$ on the critic side and an MLP head producing $V_\psi(s_t, z^c_t)$. The two encoders are trained in parallel, $f_\theta$ by the policy gradient and $f_\psi$ by the value loss, typically with TD learning. Figure~\ref{fig:critics} illustrates this design.
It duplicates the recurrent computation at training time, and creates a second representation-learning problem on the critic side. 
The representation $z^c$ learned by the critic may also diverge from $z^a$ learned by the policy, since the two are trained from different losses. 
We show in the next section that this second encoder is not necessary.

\begin{figure}[t]
    \centering
    \includegraphics[width=0.85\linewidth]{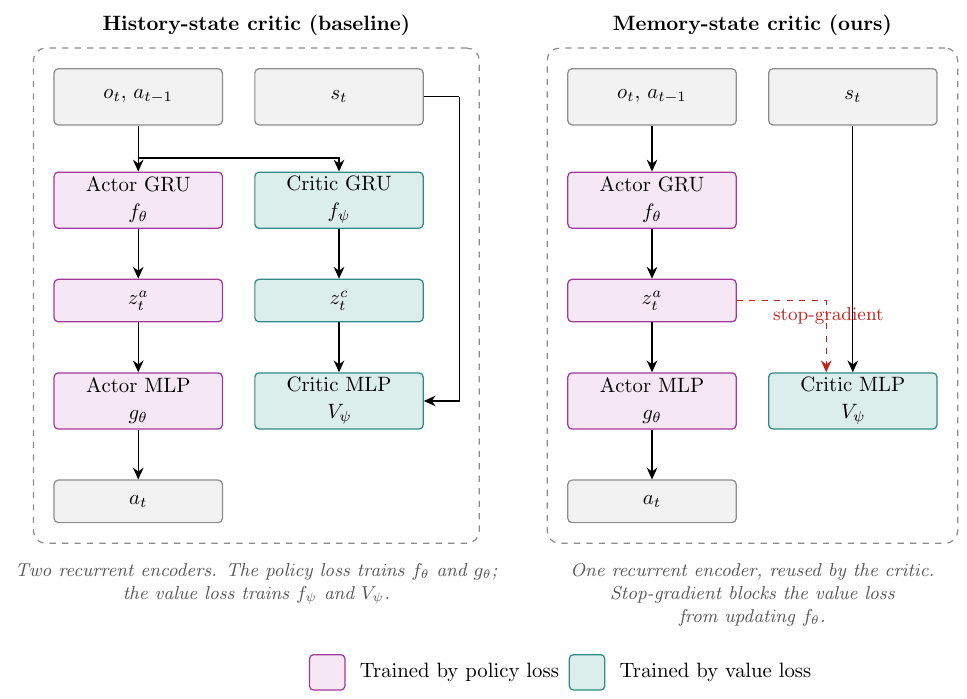}
    \caption{Asymmetric actor-critic architectures with a recurrent policy. Left: the history-state critic maintains a second recurrent encoder $f_\psi$ on the critic side, in parallel with the policy's encoder $f_\theta$. Right: the memory-state critic reuses the policy's hidden state $z^a$ with a stop-gradient, removing the second recurrent module. Both critics consume the state $s$ in their MLP head.}
    \label{fig:critics}
\end{figure}

\section{The memory-state critic}
\label{sec:method}

\subsection{Definition}

Let $z^a_t = f_\theta(h_t)$ be the policy's memory. We define the \emph{memory-state value function} and its action-value counterpart as
\begin{equation}
    V^\pi(s_t, z^a_t) = \mathbb{E}^\pi[G_t \mid s_t, z^a_t], \qquad Q^\pi(s_t, z^a_t, a_t) = \mathbb{E}^\pi[G_t \mid s_t, z^a_t, a_t].
\label{eq:memory-state-value}
\end{equation}
The conditioning event is now the pair $(s_t, z^a_t)$ of the state and the policy's memory, rather than the pair $(s_t, h_t)$ of the state and the full history. We will show that, on-policy, the two value functions agree, and that $V^\pi(s, z^a)$ can be used in the asymmetric policy gradient~(\ref{eq:asymmetric-pg}) without bias. 

\subsection{On-policy equality}

\begin{proposition}[On-policy equality]
\label{prop:equality}
Let $f_\theta$ be a deterministic recurrent encoder, and $\pi_\theta(a \mid h) = g_\theta(a \mid f_\theta(h))$ the corresponding recurrent policy. Then, for all reachable $(h_t, s_t)$,
\begin{equation}
    V^\pi(s_t, z^a_t) = V^\pi(s_t, h_t) \quad \text{and} \quad Q^\pi(s_t, z^a_t, a_t) = Q^\pi(s_t, h_t, a_t) \quad \text{with } z^a_t = f_\theta(h_t).
\end{equation}
\end{proposition}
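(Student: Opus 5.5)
The plan is to prove first that, conditioned on the pair $(s_t, h_t)$, the law of the future trajectory $(a_t, s_{t+1}, o_{t+1}, a_{t+1}, \dots)$ under $\pi$ depends on $h_t$ only through $z^a_t = f_\theta(h_t)$. Once this is shown, $V^\pi(s_t, h_t) = \mathbb{E}^\pi[G_t \mid s_t, h_t]$ is a function $\phi(s_t, f_\theta(h_t))$ of $(s_t, z^a_t)$ alone. Since $z^a_t$ is $\sigma(h_t)$-measurable, the tower property then gives
\begin{equation*}
V^\pi(s_t, z^a_t) = \mathbb{E}^\pi\big[\mathbb{E}^\pi[G_t \mid s_t, h_t] \,\big|\, s_t, z^a_t\big] = \mathbb{E}^\pi\big[\phi(s_t, z^a_t) \,\big|\, s_t, z^a_t\big] = \phi(s_t, z^a_t) = V^\pi(s_t, h_t).
\end{equation*}
The same argument with $(s_t, h_t, a_t)$ in place of $(s_t, h_t)$ gives the $Q$ statement. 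The restriction to reachable $(h_t, s_t)$ ensures that all these conditional expectations are well-defined, meaning the conditioning events have positive probability or density under $\pi$.

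The key step is the Markov property of the augmented process $x_t = (s_t, z^a_t)$, and I would prove it by writing out the future trajectory distribution. Given $(s_t, h_t)$, the joint density of $(a_t, r_t, s_{t+1}, o_{t+1}, a_{t+1}, \dots, o_{t+k})$ factorises as
\begin{equation*}
\prod_{j=0}^{k-1} g_\theta(a_{t+j} \mid z^a_{t+j})\, T(s_{t+j+1} \mid s_{t+j}, a_{t+j})\, O(o_{t+j+1} \mid s_{t+j+1}),
\end{equation*}
where each reward is the deterministic function $R(s_{t+j}, a_{t+j})$. This uses that $s_{t+1}$ depends on the past only through $(s_t, a_t)$, that $o_{t+1}$ depends only on $s_{t+1}$, and that $a_t$ depends only on $h_t$ through the policy. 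By the recurrent update rule, $z^a_{t+j} = u(z^a_{t+j-1}, a_{t+j-1}, o_{t+j})$, and an induction on $j$ shows that every $z^a_{t+j}$ is a deterministic function of $z^a_t$ and the future actions and observations up to time $t+j$. No other dependence on $h_t$ appears. Hence every factor, and therefore the whole trajectory law and the law of $G_t$, depends on $(s_t, h_t)$ only through $(s_t, z^a_t)$. To pass from finite horizons $k$ to the infinite sum $G_t$, I would take limits, using bounded rewards and $\gamma < 1$ with dominated convergence.

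The main obstacle is the measure-theoretic care needed when conditioning on $z^a_t$ itself. If $f_\theta$ maps uncountably many histories to the same memory, or if the memory is continuous, $\mathbb{E}[\cdot \mid s_t, z^a_t]$ is defined only almost surely. The tower-property step then has to be read as an equality of versions, made pointwise on the support, which is the role of the reachability assumption. In the discrete case everything is elementary: write $V^\pi(s, z)$ as a weighted average of $V^\pi(s, h)$ over histories $h$ with $f_\theta(h) = z$, weighted by $\Pr(h \mid s, z)$. The average is constant because each term equals $\phi(s, z)$. I would present this discrete version explicitly and remark that the general case follows from the same factorisation via regular conditional distributions.
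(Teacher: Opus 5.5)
Your proposal is correct and follows the paper's argument. The future-trajectory law given $(s_t,h_t)$ factorises through $g_\theta$, $T$, $O$ and the recurrent update $u$, so it depends on $h_t$ only through $z^a_t$; the same holds with $a_t$ fixed for the $Q$ statement. What you add is the explicit tower-property (fibre-averaging) step that turns ``$\mathbb{E}^\pi[G_t \mid s_t,h_t]$ is a function of $(s_t,z^a_t)$'' into equality with $\mathbb{E}^\pi[G_t \mid s_t,z^a_t]$, which the paper uses implicitly when it asserts that the two conditionings induce the same trajectory law.
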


\begin{proof}
We hereafter show that conditioning on $(s_t, z^a_t)$ and conditioning on $(s_t, h_t)$ induce the same on-policy distribution over future trajectories $(s_{t+k}, a_{t+k})_{k \geq 0}$. 
The policy's action satisfies $a_t \sim g_\theta(\cdot \mid z^a_t)$ by the factorisation~(\ref{eq:policy-factorisation}), so it depends on $h_t$ only through $z^a_t$. The transition $s_{t+1} \sim T(\cdot \mid s_t, a_t)$ depends on $h_t$ only through $a_t$, hence only through $z^a_t$. T
he next observation $o_{t+1} \sim O(\cdot \mid s_{t+1})$ depends on $h_t$ only through $s_{t+1}$, hence only through $(s_t, z^a_t)$. 
The deterministic encoder update $z^a_{t+1} = u(z^a_t, a_t, o_{t+1})$ depends on $h_t$ only through $(z^a_t, a_t, o_{t+1})$, hence only through $(s_t, z^a_t)$. By induction on the trajectory length, the on-policy distributions of $(s_{t+k}, a_{t+k})_{k \geq 0}$ conditioned on $(s_t, z^a_t)$ and on $(s_t, h_t)$ coincide. 
The future-return distributions therefore agree, and the two value functions are equal.
The same induction, started from a fixed $a_t$ instead of $a_t \sim g_\theta(\cdot \mid z^a_t)$, gives the action-value statement.
\end{proof}

\subsection{Asymmetric policy gradient with the memory-state value function}
\begin{proposition}[Asymmetric policy gradient]
\label{prop:pg}
Under the same assumptions as Proposition~\ref{prop:equality}, the asymmetric policy gradient can be written as
\begin{equation}
    \nabla_\theta J(\pi_\theta) = \mathbb{E}^\pi\!\left[\sum_{t} \gamma^t \nabla_\theta \log \pi_\theta(a_t \mid h_t) \, A^\pi(s_t, z^a_t, a_t)\right],
\label{eq:memory-state-pg}
\end{equation}
where $A^\pi(s_t, z^a_t, a_t) = Q^\pi(s_t, z^a_t, a_t) - V^\pi(s_t, z^a_t)$.
\end{proposition}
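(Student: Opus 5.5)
The argument reduces Proposition~\ref{prop:pg} to the history-state form of the asymmetric policy gradient in~(\ref{eq:asymmetric-pg}), established by \citet{baisero2022unbiased}. That identity already gives
\[
\nabla_\theta J(\pi_\theta) = \mathbb{E}^\pi\!\left[\sum_t \gamma^t \nabla_\theta \log \pi_\theta(a_t \mid h_t)\, A^\pi(s_t, h_t, a_t)\right],
\]
so it suffices to show that the integrand is unchanged when $A^\pi(s_t, h_t, a_t)$ is replaced by $A^\pi(s_t, z^a_t, a_t)$.

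The replacement is justified pointwise. The expectation $\mathbb{E}^\pi$ is over on-policy trajectories, so every triple $(h_t, s_t, a_t)$ with positive probability (or density) has $(h_t, s_t)$ reachable and $a_t$ in the support of $\pi_\theta(\cdot\mid h_t)$. For such triples, Proposition~\ref{prop:equality} gives
\[
V^\pi(s_t, z^a_t) = V^\pi(s_t, h_t), \qquad Q^\pi(s_t, z^a_t, a_t) = Q^\pi(s_t, h_t, a_t), \qquad z^a_t = f_\theta(h_t).
\]
Subtracting the two equalities yields $A^\pi(s_t, z^a_t, a_t) = A^\pi(s_t, h_t, a_t)$ almost surely under the on-policy measure. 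Substituting this into each summand of~(\ref{eq:asymmetric-pg}) gives~(\ref{eq:memory-state-pg}).

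The main point needing care is measure-theoretic. The memory-state conditional expectations $\mathbb{E}^\pi[G_t\mid s_t,z^a_t]$ are defined only up to null sets, so Proposition~\ref{prop:equality} is to be read as an almost-sure statement on the reachable set. Changing the integrand on a null set does not alter the expectation, so this reading is enough. If the sum over $t$ is infinite, exchanging sum and expectation requires a dominated-convergence condition. I take this to be the same condition already used for~(\ref{eq:asymmetric-pg}), namely bounded rewards, $\gamma<1$, and integrable score functions, and I would invoke it without proving it again.

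A direct alternative avoids the history-state form entirely. One writes $A^\pi(s_t,z^a_t,a_t)$ and conditions on $h_t$. Since $z^a_t$ is a function of $h_t$, the tower property gives
\[
\mathbb{E}_{s_t\mid h_t}\big[Q^\pi(s_t,z^a_t,a_t)\big] = Q^\pi(h_t,a_t), \qquad \mathbb{E}_{s_t\mid h_t}\big[V^\pi(s_t,z^a_t)\big] = V^\pi(h_t).
\]
The first equality uses the conditional independence of $s_t$ and $a_t$ given $h_t$. This recovers the symmetric form in~(\ref{eq:asymmetric-pg}) directly, and I would mention it as a remark.
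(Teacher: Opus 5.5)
Your main argument is correct and is essentially the paper's proof. The paper also starts from the history-state form of~(\ref{eq:asymmetric-pg}) and uses Proposition~\ref{prop:equality} to make $A^\pi(s_t,z^a_t,a_t)$ and $A^\pi(s_t,h_t,a_t)$ coincide on every reachable $(s_t,h_t,a_t)$ (its Step~2); its Step~1 is a side remark your argument does not need, and your almost-sure reading is a harmless refinement.

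Your closing ``direct alternative'', however, is mis-justified.
\begin{itemize}
\item The identity $\mathbb{E}_{s_t\mid h_t}[V^\pi(s_t,z^a_t)] = V^\pi(h_t)$ reads $\mathbb{E}[\mathbb{E}[G_t\mid s_t,z^a_t]\mid h_t] = \mathbb{E}[G_t\mid h_t]$. The tower property yields this only if $\sigma(h_t)\subseteq\sigma(s_t,z^a_t)$. Knowing that $z^a_t$ is a function of $h_t$ gives only the opposite inclusion, $\sigma(z^a_t)\subseteq\sigma(h_t)$.
\item Your reasoning uses nothing about $z^a_t$ beyond its being a function of $h_t$. It would therefore apply verbatim to a constant $z$ and ``prove'' $\mathbb{E}[\mathbb{E}[G_t\mid s_t]\mid h_t] = V^\pi(h_t)$ for a history-dependent policy. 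That is false: it is exactly the failure behind the bias of the state-only critic shown by \citet{baisero2022unbiased}.
\end{itemize}

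The missing ingredient is that the policy acts through $z^a_t$, which is Proposition~\ref{prop:equality}. You must first replace $V^\pi(s_t,z^a_t)$ by $V^\pi(s_t,h_t)$ and $Q^\pi(s_t,z^a_t,a_t)$ by $Q^\pi(s_t,h_t,a_t)$. Only then does the tower property apply legitimately, since $\sigma(h_t)\subseteq\sigma(s_t,h_t)$; for the $Q$ term you also need the conditional independence of $s_t$ and $a_t$ given $h_t$.

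This corrected route is precisely the paper's Step~1. So the alternative does not avoid the history-state quantities; it merely trades the second line of~(\ref{eq:asymmetric-pg}) for the first.
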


\begin{proof}
We show that both advantages coincide, so the substitution leaves the gradient unchanged. Step 1 first records that $V^\pi(s,z^a)$ meets the baseline requirement of \citet{baisero2022unbiased}.

\emph{Step 1: $V^\pi(s, z^a)$ averages back to the history value.} \citet{baisero2022unbiased} prove that $V^\pi(h_t) = \mathbb{E}_{s_t \mid h_t}[V^\pi(s_t, h_t)]$. Substituting Proposition~\ref{prop:equality},
\begin{equation}
    \mathbb{E}_{s_t \mid h_t}[V^\pi(s_t, z^a_t)] = \mathbb{E}_{s_t \mid h_t}[V^\pi(s_t, h_t)] = V^\pi(h_t).
\end{equation}
The memory-state value function is therefore an unbiased estimate of the history value function.

\emph{Step 2: the two advantages coincide.} Proposition~\ref{prop:equality}, applied also to the action-value function, makes the two advantages coincide pointwise on every reachable $(s_t, h_t, a_t)$:
\begin{equation}
    A^\pi(s_t, z^a_t, a_t) = Q^\pi(s_t, z^a_t, a_t) - V^\pi(s_t, z^a_t) = Q^\pi(s_t, h_t, a_t) - V^\pi(s_t, h_t) = A^\pi(s_t, h_t, a_t).
\end{equation}
The two expectations are therefore taken of the same random variable, and~(\ref{eq:memory-state-pg}) follows.
\end{proof}

\paragraph{The state is essential.}
Both propositions rely on having the state $s_t$ in the conditioning.
Without it, the policy's memory $z^a_t$ alone is not a sufficient statistic for the future return.
Two histories that map to the same $z^a$ can correspond to different states, and the state in turn drives the future transitions through $T(\cdot \mid s_t, a_t)$.
\citet{lambrechts2025theoreticaljustificationasymmetricactorcritic} gives a finite-time convergence analysis of asymmetric critics with linear function approximators, in which the state appears as exactly the quantity that eliminates aliasing in the agent state.
Proposition~\ref{prop:equality} is the analogous statement, the state closes the aliasing left by the memory.

\subsection{Approximation in practice}

The two value functions $V^\pi(s_t, h_t)$ and $V^\pi(s_t, z^a_t)$ are analytical objects. In practice, both are approximated by neural networks trained with TD learning.
The memory-state critic uses the same architecture for the value head but reads its recurrent input from the policy's encoder:
\begin{equation}
    V_\psi(s_t, z^a_t) \quad \text{with } z^a_t = f_\theta(h_t).
\end{equation}
The critic shares $f_\theta$ in the forward pass. The critic loss is not backpropagated through $f_\theta$. The encoder $f_\theta$ is updated only by the policy gradient. 
A stop-gradient placed on $z^a_t$ in the input to the critic implements this.
Without the stop-gradient, the value loss would flow into the policy's encoder and shape $z^a$ to minimise value error alongside the policy loss, which is known to destabilise on-policy training \citep{cobbe2021ppg}. 
The resulting architecture is shown on the right of Figure~\ref{fig:critics}.
By comparison, the history-state critic of \citet{baisero2022unbiased} is approximated by $V_\psi(s_t, z^c_t)$ with $z^c_t = f_\psi(h_t)$, where $f_\psi$ is a critic-side recurrent encoder with parameters $\psi$ disjoint from $\theta$. Both $f_\theta$ and $f_\psi$ are trained, the first by the policy gradient and the second by the value loss.

\section{Experiments}
\label{sec:experiments}

\subsection{Environment}

We evaluate the memory-state critic on a vision-based pursuit-evasion environment between two Crazyflie quadrotors. 
The pursuer is the only learning agent. 
It must get within a 20 cm radius of the evader and within 10 seconds to neutralise it.
Otherwise, the pursuer gets penalised.
The arena is a $5 \times 4 \times 2$ metre box.
We use two geometries that differ in their degree of partial observability. 
The \emph{open arena} has no obstacles, so the evader stays visible whenever it is inside the camera frame.
The \emph{wall arena} contains an opaque vertical wall with a gap, which can hide the evader from the pursuer even when the camera points at it. 
The wall arena is the environment where the theory predicts the largest gap between critics, because occlusion creates histories that the privileged state alone cannot disambiguate.

The pursuer observes a $64 \times 64$ two-channel image from a camera, with a depth channel and a binary opponent-segmentation mask, together with its previous action.
A visualisation of the pursuer input is presented in Figure \ref{fig:env}.
The policy is therefore conditioned on the history.
The critic additionally sees a 64-dimensional privileged state $s_t$ with the state of both drones, relative-pose features, propeller speeds, and an identifier of the heuristic drawn from the evader pool at the start of the episode.
The pursuer acts through a four-dimensional body-rate and collective-thrust command at $25$ Hz, tracked by an inner PD loop at $500$ Hz.
Dynamics are modelled at the propeller level. Implementation details are in Appendix~\ref{app:appendix}.

The evader is sampled per episode from a fixed pool of four heuristic behaviours: a stationary hover, a circular trajectory, a lemniscate trajectory, and an artificial-potential-field reactive policy\citep{maarif2021afp}. Each heuristic acts as a function of the state, and its identifier at episode start is part of $s_t$. From the pursuer's point of view, the evader is therefore part of the environment, and the setting reduces to a single-agent POMDP that satisfies the assumptions of Proposition~\ref{prop:equality}.
The construction does not directly apply when the opponent is a learned recurrent policy that conditions on its own history and does not take into account that the other might still be learning.
We leave it to future work. 

\begin{figure}[t]
    \centering
    \includegraphics[width=0.95\linewidth]{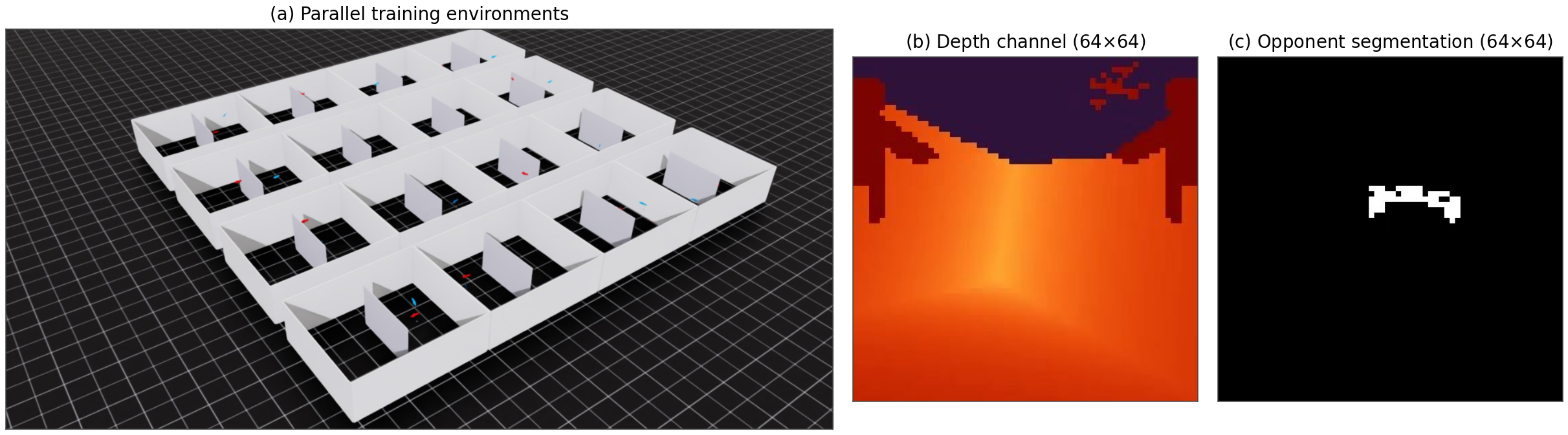}
    \caption{Vision-based pursuit-evasion environment. Left: IsaacLab simulates many arenas in parallel. Each contains a pursuer (red), an evader (blue), and an optional wall with a gap that can hide the evader. Middle: a representative depth channel from the pursuer's camera. Right: the binary opponent segmentation mask. Both channels are $64 \times 64$.}
    \label{fig:env}
\end{figure}

Let $d_t$ be the inter-agent distance, replaced by a wall-aware distance $\tilde d_t$ in the wall arena.
Let $\delta^+_t$ be the capture flag, $b^P_t$ and $w^P_t$ the pursuer out-of-bounds and wall-collision flags, $v_t$ a visibility flag that equals one only when the evader is visible, and $\rho_t$ the angular offset of the evader from the camera axis.
The pursuer's reward is
\begin{equation}
r^P_t = \underbrace{\kappa_a \big[\Phi(\tilde d_t) - \Phi(\tilde d_{t-1})\big]}_{\text{approach}} + \underbrace{\kappa_p \, v_t \, e^{-\kappa_\rho \rho_t}}_{\text{visibility}} - \underbrace{ 1/T + R \, \delta^+_t}_{\text{time and capture}} - \underbrace{ R \, (b^P_t + w^P_t)}_{\text{crash}} - \kappa_{\text{br}} \|W_{\text{br}} \omega_t\|,
\label{eq:reward}
\end{equation}
with approach potential $\Phi(d) = \exp(-d/d_0)$. 
The visibility term rewards keeping the evader near the camera axis, but only when it is visible, so the agent cannot earn it by pointing at an occluded evader.
Coefficient values are in Appendix~\ref{app:appendix}.

\subsection{Architecture and training}

We integrate the memory-state critic into proximal policy optimization\citep{schulman2017ppo}, extended for recurrent actors, building on skrl\citep{Serrano2022skrl}.
The actor is a convolutional encoder followed by a GRU and an action head.
It consumes the onboard camera observation and the previous action, and outputs a continuous action distribution.
The critic is a multi-layer perceptron that takes the privileged state $s_t$ concatenated with the recurrent encoded history chosen by the critic variant.
For the memory-state critic, the recurrent encoded history is the actor's hidden state $z^a_t$. 
For the history-state critic, it is a critic-side encoder $f_\psi$ with independent weights that generates the hidden state $z^c_t$. 
Both encoders share the same architecture.
For the wall arena, we use a slightly larger discount factor $\gamma = 0.999$ to provide a longer credit-assignment horizon around the wall gap.
All other hyperparameters are unchanged.
Hyperparameters are in Appendix~\ref{app:appendix}.

\subsection{Critic variants and protocol}

We compare four critics, listed in Table~\ref{tab:critics}. They differ only in the input of the critic. The actor architecture and all other hyperparameters are identical across variants. 
The observation-state critic tests whether a recurrent encoder on the critic side is necessary at all or if observation and previous action are enough.
The state-only and history-state critics are the baselines from prior work.
We train each critic variant with proximal policy optimization for $5.12 \times 10^7$ environment steps under the same hyperparameters (Appendix~\ref{app:appendix}). 
We run 20 seeds per variant for $V_\psi(s)$, $V_\psi(s, o, a)$, $V_\psi(s, z^c)$, and $V_\psi(s, z^a)$ in the open and wall arenas.
We report aggregate performance using the interquartile mean (IQM) across seeds \citet{agarwal2022deepreinforcementlearningedge}. Bands are 95$\%$ bootstrap confidence intervals over seeds.

\begin{table}[h]
\centering
\caption{Critic variants compared in the experiments.}
\label{tab:critics}
\begin{tabular}{lll}
\toprule
Critic & Symbol & Description \\
\midrule
Observation-state & $V_\psi(s, o, a)$ & State, current observation, previous action. \\
State-only & $V_\psi(s)$ & State only. Biased with a policy conditioned on the history. \\
History-state & $V_\psi(s, z^c)$ & State and a critic-side recurrent encoding of the history. \\
Memory-state (ours) & $V_\psi(s, z^a)$ & State and the policy's memory with stop-gradient. \\
\bottomrule
\end{tabular}
\end{table}
\section{Results}
\label{sec:results}
\begin{figure}[h]
    \centering
    \includegraphics[width=0.95\linewidth]{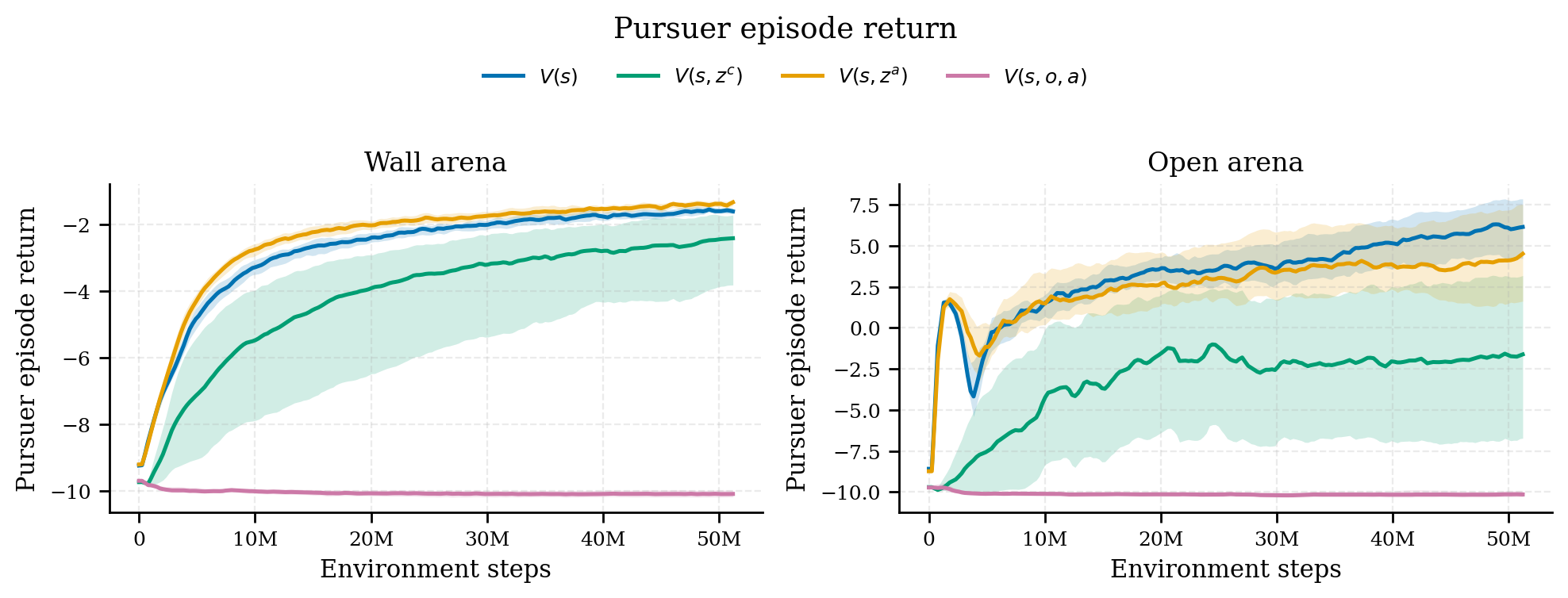}
    \caption{Pursuer episode return during training. Lines are interquartile means over 20 seeds \citep{agarwal2022deepreinforcementlearningedge}. The shaded intervals are $95\%$ bootstrap confidence intervals. The memory-state critic matches or exceeds the history-state critic in both arenas and converges faster. It exceeds the state-only critic in the wall arena and matches it in the open arena.}
    \label{fig:reward_evolution}
\end{figure}

\textbf{The memory-state critic outperforms the history-state critic and converges faster.}
Figure~\ref{fig:reward_evolution} reports the episodic return during training.
In both arenas, the memory-state critic reaches at least the final return of the history-state critic in fewer steps.
This is the outcome demonstrated by Proposition~\ref{prop:equality}.
In the open arena, the history-state critic plateaus lower and has way more variance across seeds.
We attribute this to its second optimisation problem, since its encoder must learn from the value loss alone a representation predictive of future return.
Therefore, it should be predictive of the actor's next action without observing the actor's own representation, which is what the memory-state critic reuses for
free. 
While these theoretical and empirical findings are promising, they are currently limited in terms of environment diversity.
Because the construction holds for any asymmetric POMDP with a deterministic recurrent actor, comparing it on standard POMDP benchmarks is a natural next step.

\textbf{The bias of the state-only critic matters only when the state aliases the history.}
Propositions~\ref{prop:equality} and~\ref{prop:pg} guarantee that the critics are unbiased, but not that they beat the biased state-only critic in every environment.
The size of the bias scales with how much the privileged state aliases the history~\citep{lambrechts2025theoreticaljustificationasymmetricactorcritic}.
In the open arena, the evader is visible whenever it is in frame, and the observation model is deterministic, so the state is close to Markov for the future return, and the state-only critic is nearly unbiased.
In addition, the state-only critic has an easier architecture to train.
Its return is therefore comparable to the memory-state critic's, consistent with it solving a smaller learning problem.
In the wall arena, the evader is repeatedly occluded, so two histories that reach the same state can disagree on where the evader was last seen.
The state-only critic cannot represent this difference, in contrast to the history-state and memory-state critic.
This could explain why the memory-state critic marginally performs better in the wall arena and the history-state critic is closer to the other in episode return

\textbf{The reactive critic did not learn in our environment.}
The reactive critic $V_\psi(s, o, a)$ replaces the recurrent memory with the current observation and previous action, and fails to learn a useful policy in either arena (Figure~\ref{fig:reward_evolution}).
The current observation and previous action carry far less information about the return than the history.
This confirms that the memory term, and not merely the state, is what recovers the history-state performance.

\begin{figure}[h]
    \centering
    \includegraphics[width=0.95\linewidth]{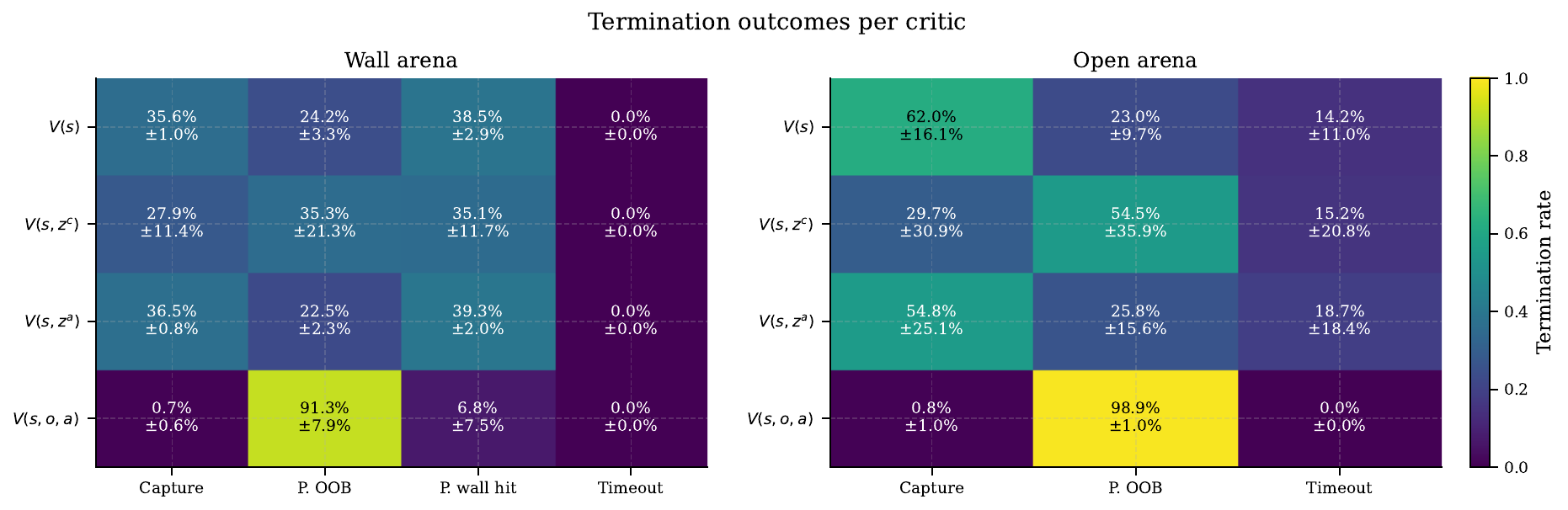}
    \caption{Terminal-condition breakdown per critic and arena. Each cell shows the fraction of evaluation episodes ending in the given outcome as the mean over 20 seeds, $\pm$ one standard deviation. The wall arena adds the pursuer wall collision outcome.}
    \label{fig:terminations}
\end{figure}

\textbf{Termination analysis.}
Figure~\ref{fig:terminations} decomposes how episodes end.
Among the critics that learns to capture the evaders, the history-state critic has the highest pursuer out-of-bounds (OOB) rate in the open arena, consistent with its lower return, as the actor overshoots the evader rather than tracking it.
The reactive critic $V_\psi(s, o, a)$ crashes (equivalent to OOB) in almost every episode, but it never learns to capture.
The memory-state critic achieves the best capture rate in the wall arena, where occlusion makes the privileged state alias the history.
In the wall arena, pursuer wall collisions account for roughly $35$ to $40\%$ of episodes for all learning critics, indicating that wall avoidance is a hard subproblem independent of the critic's history representation.

\section{Related work}
\label{sec:related}

\paragraph{Asymmetric actor-critic.}
\citet{pinto2018asymmetric} introduced the asymmetric actor-critic algorithm for image-based robot learning, with a reactive policy and a state-only critic.
\citet{baisero2022unbiased} prove that with a history-dependent policy, the time-invariant state value function is generally ill-defined, and that even when well-defined the resulting policy gradient is biased.
They propose the history-state critic and show that it yields an unbiased policy gradient.
\citet{lyu2022deeper} extend the analysis to multi-agent reinforcement learning. 
\citet{lambrechts2025theoreticaljustificationasymmetricactorcritic} give a finite-time convergence analysis with linear function approximators.
\citet{ebi2025informed} generalise the construction to partial privileged information beyond the full state. 
Our work provides a deep RL implementation of the history-state critic in which the policy's recurrent encoder is reused on the critic side, removing the second encoder.

\paragraph{Recurrent policies for POMDPs.}
Recurrent policies are a standard way to handle partial observability, and are competitive with transformer-based variants in RL \citet{ni2022recurrentmodelfreerlstrong, hausknecht2015deep}. The history-dependent policy in our work follows this design. Our contribution is on the critic side.

\paragraph{Vision-based drone control.}
\citet{kaufmann2023champion} train a vision-based drone racing policy in simulation with an asymmetric actor-critic, with the policy relying on state estimation from off-board sensors at training time.
\citet{geles2024demonstrating} remove the state-estimation step and train an end-to-end vision policy with a reactive actor.
The questions addressed in this work do not arise in their setting because their policy is non-recurrent. Drone racing also exhibits little aliasing, because the next gate rarely leaves the camera frame, in contrast to pursuit-evasion.
Model-based vision RL has shown strong results in single-agent racing \citep{romero2026dream, verraest2025skydreamer} and is complementary to our work.

\paragraph{Quadrotor pursuit-evasion.}
\citet{roncero2025learned} and \cite{hansen2026surveillance} train quadrotor pursuit-evasion policies under full state observation in a multi-agent setting.
\citet{bajcsy2023learning} consider vision-based pursuit-evasion with a ground robot evader. 
To the best of our knowledge, our environment is the first open-source vision-based quadrotor pursuit-evasion with propeller-level dynamics.

\section{Conclusion}
\label{sec:conclusion}
We introduced the memory-state critic, a single-encoder asymmetric critic for recurrent actors in POMDPs. 
It uses the actor's recurrent hidden state, which avoids learning a second statistic of the history for the value function.
We proved that it gives the same on-policy value function and unbiased policy gradient as the history-state critic of \citet{baisero2022unbiased}, with one recurrent encoder instead of two. 
In a vision-based pursuit-evasion environment between two quadrotors, the memory-state critic outperforms the history-state critic in final performance and converges faster.
It also marginally outperforms the state-only critic, especially when partial observability is important.
Sim-to-real transfer of vision-based drone policies is an active area \citep{yu2025depth}, and physical deployment is future work.

\begin{ack}
Gaspard Lambrechts is a \emph{Marie Sk\l{}odowska-Curie Actions} (MSCA) postdoctoral fellow at McGill University in Canada, funded by \emph{Horizon Europe} under grant 101284668.
Gaspard Lambrechts also gratefully acknowledges the financial support of \emph{Wallonia-Brussels International} for his \emph{World Excellence Fellowship}.
\end{ack}


\bibliographystyle{plainnat}   
\bibliography{refs}

\appendix

\section{Appendix}
\label{app:appendix}

\paragraph{Quadrotor dynamics.}
We model each quadrotor as a rigid body of mass $m$ and inertia tensor $\mathbf{J}$, with state $(p, v, R, \omega, \Omega)$ for position, velocity, attitude, body rates, and the four rotor speeds. 
The continuous dynamics follow \citet{kaufmann2023champion}. Each rotor produces thrust $k_T \Omega_i^2$ and moment $k_M \Omega_i^2$, and the rotor speeds follow first-order dynamics $\dot\Omega_i = k_\text{mot} (\Omega_{\text{ss}, i} - \Omega_i)$ towards the commanded steady-state speed $\Omega_{\text{ss}, i}$.

\paragraph{Action space and low-level control.}
The policy outputs a four-dimensional action $a \in [-1, 1]^4$ at $25$ Hz, three commanded body rates and a collective thrust.
Body rates are scaled to $\omega^{\text{ref}}_k = \pi \, \omega^{\text{cmd}}_k$ rad/s, and the thrust to $T^{\text{ref}} = \tfrac{1}{2}(T^{\text{cmd}} + 1) \, m g \, \eta_{\text{T2W}}$ with $\eta_{\text{T2W}} \approx 3.15$. An inner PD controller at $500$ Hz tracks these references and produces the per-rotor setpoints.

\paragraph{Camera.}
Each drone carries a forward-facing pinhole camera rendered with IsaacLab's tiled-rendering pipeline, at $64 \times 64$ resolution and a $120^\circ$ horizontal field of view. We read two channels, depth and a binary opponent-segmentation mask, with a clipping range $0.02$ to $20$ m.

\paragraph{Reward and arena.}
The reward coefficients in (\ref{eq:reward}) are $R = 10.0$, $\kappa_a = 3.0$, $d_0 = 0.5$ m, $\kappa_p = 0.02$, $\kappa_\rho = 1.0$, and $\kappa_{\text{br}} = 0.001$, with capture distance $r_c = 0.2$ m and maximum episode length $T = 250$ steps. The arena bounds are $[-2.5, 2.5] \times [-2.0, 2.0] \times [0.0, 2.0]$ m. The wall in the occluded arena is $0.05$ m thick with a $1.5$ m gap, and the drone-wall clearance is $0.15$ m. The body-rate weighting is $W_{\text{br}} = \mathrm{diag}(1, 1, 0.2)$, which lowers the yaw penalty to allow reorientation.

\begin{table}[h]
\centering
\caption{PPO hyperparameters, shared across all critic variants.}
\label{tab:hyperparams}
\begin{tabular}{ll}
\toprule
Parameter & Value \\
\midrule
Discount factor $\gamma$ & 0.99 (open) / 0.999 (wall) \\
GAE $\lambda$ & 0.95 \\
Learning rate & $3 \times 10^{-4}$ \\
LR scheduler & KL-adaptive ($\tau_{\text{KL}} = 0.008$) \\
Rollout length & 32 steps \\
Mini-batches & 4 \\
Learning epochs & 8 \\
Clip ratio $\epsilon$ & 0.2 \\
Value clip & 0.2 \\
Entropy coefficient & 0.01 \\
Value loss coefficient & 2.0 \\
Gradient norm clip & 1.0 \\
Reward scaling & 0.1 \\
Initial log $\sigma$ & $0.0$ \\
\midrule
GRU hidden size & 256 \\
GRU layers & 1 \\
Backpropagation sequence length & 16 \\
\midrule
Parallel environments & 512 \\
Policy rate & 25 Hz \\
Simulation rate & 500 Hz \\
Episode length & 10 s \\
\bottomrule
\end{tabular}
\end{table}

\end{document}